\newtheorem{theorem}{Theorem}
\newtheorem{lemma}[theorem]{Lemma}
\newtheorem{definition}{Definition}
\newcommand{\BibTeX}{B\kern-.05em{\sc i\kern-.025em b}\kern-.08em\TeX}
\begin{document}


\begin{frontmatter}


\paperid{2197} 


\title{TED: Accelerate Model Training by Internal Generalization}


\author[A]{\fnms{Jinying}~\snm{Xiao}\thanks{Corresponding Author. Email: xiaojinying1014@163.com}\footnote{Equal contribution.}}
\author[A]{\fnms{Ping}~\snm{Li}\footnotemark}
\author[A]{\fnms{Jie}~\snm{Nie}} 

\address[A]{Changsha University of Science and Technology}


\begin{abstract}
Large language models have demonstrated strong performance in recent years, but the high cost of training drives the need for efficient methods to compress dataset sizes. We propose TED pruning, a method that addresses the challenge of overfitting under high pruning ratios by quantifying the model's ability to improve performance on pruned data while fitting retained data, known as Internal Generalization (IG). TED uses an optimization objective based on Internal Generalization Distance (IGD), measuring changes in IG before and after pruning to align with true generalization performance and achieve implicit regularization. The IGD optimization objective was verified to allow the model to achieve the smallest upper bound on generalization error. The impact of small mask fluctuations on IG is studied through masks and Taylor approximation, and fast estimation of IGD is enabled. In analyzing continuous training dynamics, the prior effect of IGD is validated, and a progressive pruning strategy is proposed. Experiments on image classification, natural language understanding, and large language model fine-tuning show TED achieves lossless performance with 60-70\% of the data. Upon acceptance, our code will be made publicly available.
\end{abstract}

\end{frontmatter}


\section{Introduction}

Deep learning has achieved significant progress in both visual tasks \citep{ref19,ref20,ref21} and natural language tasks \citep{ref49,ref50,ref51}. Despite the impressive performance of models, most training and fine-tuning processes rely on ultra-large datasets, which results in significant computational demands and time costs. Reducing the training workload on large datasets is essential for the broader application of deep learning.

Dataset pruning aims to accelerate training while maintaining model performance by retaining high-performance subsets or reducing the number of iterations. Previous work focused on deleting redundant samples to retain a compact core set \citep{ref22,ref23,ref24,ref28}. However, these methods often require training one or more surrogate models to obtain data features and distribution, resulting in longer overhead than a single training cycle. Moreover, they lacks dynamic awareness, as these methods tend to retain samples that perform well in later stages of training \citep{ref43,ref44}.

To address these issues, recent methods have proposed dynamic dataset pruning during training \citep{ref5,ref6}. These approaches reconceptualize dataset pruning as a dynamic decision-making process that prunes the entire dataset during each cycle, allowing previously pruned samples to re-enter subsequent training. By closely tying dynamic scoring to model trajectories, these methods have achieved some success.

Regarding pruning criteria, many methods design evaluation functions to identify redundant samples. The most effective approach is pruning based on sample importance \citep{ref1,ref5,ref6,ref27,ref28}, either through logits and labels \citep{ref5,ref6,ref27} or gradients \citep{ref1,ref27,ref28}. Logit-based methods assess data based on model loss and retain samples that are difficult to learn, while gradient-based methods use parameter gradients to reflect sample importance.

\begin{figure}[H]
	\centering
	\includegraphics[width=5.5cm]{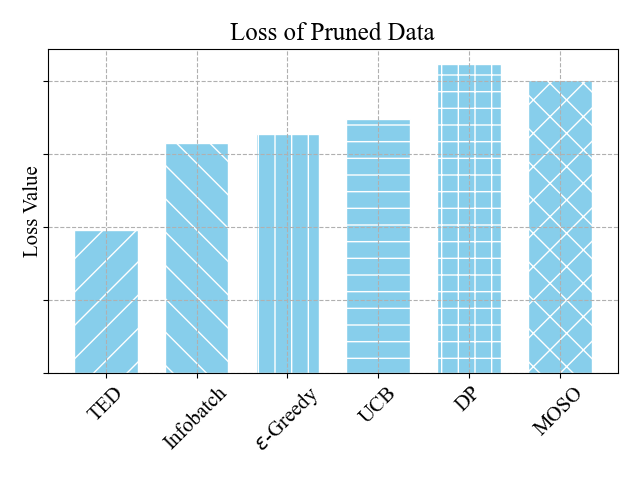}
	\caption{We used CIFAR100-ResNet18 to evaluate model loss on pruned data after convergence at a 70\% pruning ratio under different pruning methods. In dynamic pruning, we collected the data that was least frequently selected at each checkpoint.}
	\label{fig1}
\end{figure}
 
\textbf{Limitations and Motivations.} Although these approaches have yielded some success, there are still limitations that need to be addressed. (i) Overfitting of Retained Data: As illustrated in Figure~\ref{fig1}, high pruning ratios lead to poor fitting performance on the pruned data after model convergence. This suggests a disconnect between fitting the retained data and generalization to pruned data, as the pruned data do not participate in training. It is believed that the excessive focus of these methods on the retained data makes the model prone to overfitting, preventing it from gaining a comprehensive understanding from a small dataset. Essentially, their scoring functions are not efficient and do not retain data with intrinsical characteristics. (ii) Static Pruning Ratio: Hard and soft pruning \citep{ref5,ref6} were proposed, but the amount of data the model requires varies at different stages. They do not consider the dynamic changes of the optimal pruning rate during training, which can lead to a decrease in performance and an increase in training costs.

\textbf{Our contributions are as follows:}
﻿\begin{itemize}
\item We identified Internal Generalization (IG) in dataset pruning, where fitting the retained data also enhances the model's generalization to pruned data. IG was quantified in continuous training dynamics, and an optimization objective based on Internal Generalization Distance (IGD) before and after pruning was proposed. IGD's close coupling with model generalization performance led us to frame dataset pruning as optimizing IGD. This approach not only directly reflects model generalization but also achieves implicit regularization of the model on pruned data. To this end, the IGD optimization objective was verified to have the smallest upper bound on generalization error. By adding masks to data and using Taylor approximation, we evaluate the importance of samples through small changes in the masks to achieve rapid estimation of IGD.
\item While IGD is typically derived from post-training knowledge, our analysis of continuous training dynamics shows that IGD has a prior effect throughout the training cycle. This allows us to use IGD as an effective optimization objective from the early stages of training without the need to train a surrogate model. Based on experimental and theoretical analysis, a progressive data pruning strategy is proposed that schedules the pruning ratio from low to high. This approach minimizes training time while maintaining performance, thanks to the carefully designed pruning ratio schedule.
\item We introduced TED (In\underline{\textbf{t}}ernal G\underline{\textbf{e}}neralization \underline{\textbf{D}}istance) pruning, a dynamic data pruning approach. TED's performance was evaluated on image classification, natural language understanding (NLU), and large language model (LLM) fine-tuning tasks. In most cases, TED achieved lossless performance with 60\%-70\% of the data and significantly outperformed prior state-of-the-art methods, particularly at high pruning ratios.
\end{itemize}

\section{Related Work}
\textbf{Static Pruning.} Static pruning aims to select a compact core subset of data before training. For example, \cite{ref11} eliminates memorable samples based on their forgetfulness during training; \cite{ref45} selects samples with maximum diversity in gradient space; \cite{ref22} chooses a moderate core set based on the distance from data points to the center; \cite{ref1} uses influence functions to construct a minimal subset based on specific samples' impact on the model's generalization ability; and \cite{ref28} evaluates samples' importance on empirical risk and approximates scoring with a first-order approximator. While these methods select an efficient core subset, the core subset represents post-training knowledge and often requires the use of one or more surrogate models to acquire data characteristics and distribution, as in \cite{ref1}, where SENet and ResNet18 are used as surrogates to accelerate ResNet50 training. This approach leads to additional overhead and may result in a core subset lacking generalizability, such as in \cite{ref1}, where different surrogate models are needed for training ResNet architectures of varying depths.

\textbf{Dynamic Pruning.} Dynamic pruning aims to address additional overhead by aligning optimal dynamic scoring closely with the model's training trajectory. For instance, \cite{ref6} categorizes data into three groups based on the frequency of sample selection and finds that samples can transition significantly within training dynamics. They perform dataset pruning at each checkpoint based on the loss of retained data during training without using surrogate networks. Their conclusions suggest that dynamic pruning consistently outperforms static pruning, and even random selection. \cite{ref5} introduced soft pruning, arguing that while hard pruning with a constant pruning ratio requires a complexity of $\mathcal{O}(logN)$ for ranking, soft pruning only requires $\mathcal{O}(1)$. Despite reducing ranking cost, soft pruning struggles to determine the actual pruning ratio, which increases model training costs. Notably, prior to our work, \cite{ref5} and \cite{ref6} represented the most advanced dynamic pruning performance.

\section{Method}

\subsection{Problem Definition}

Given a dataset $\mathcal{D} = \{z_1, \ldots, z_n\}$ containing $n$ training samples, the goal of dataset pruning is to identify a redundant subset $\hat{\mathcal{D}}$ from the original samples, where $\hat{\mathcal{D}}\subset \mathcal{D}$, to accelerate the training process of the model. We can express the optimization parameter through $\mathcal{D} - \hat{\mathcal{D}}$ as $\hat{\theta}_{-\hat{\mathcal{D}}} =  \arg\min_{\theta} L(\mathcal{D} - \hat{\mathcal{D}}, \theta)$, where $L(\mathcal{D} - \hat{\mathcal{D}}, \theta)= \mathbb{E}_{z \in \mathcal{D} - \hat{\mathcal{D}}} loss(z, \theta)$, and $loss$ is the loss function, which is cross-entropy loss for classification tasks.
\subsection{Internal Generalization}\label{sub3.2}

Recent dynamic pruning methods \citep{ref5,ref6} often use loss as the sample scoring metric, prioritizing samples that are difficult to learn. This approach tends to choose higher-loss samples at each checkpoint, aiming to advance them to the next epoch. However, as these samples' scores (loss) decrease, they become less likely to be selected at future checkpoints. This method prioritizes extensive learning over capturing essential data features, resulting in a performance that is often similar to random dynamic pruning, as shown in subsequent experiments.

Therefore, the aim is to identify samples that can improve the model's generalization performance. Internal generalization (IG) refers to the model's ability to implicitly reduce loss among data during the training process. Specifically, the model enhances its performance on pruned data while fitting the retained data, even though the pruned data do not participate in training, as shown in Figure~\ref{fig2}. This observation demonstrates the model's generalization ability on data not involved in training. IG is a key characteristic of the model's performance on pruned data and indirectly reflects the model's true generalization performance across the entire dataset. By capturing IG, we can gain insights into the model's ability to generalize beyond the training data.
\begin{figure}[H]
	\centering
	\includegraphics[width=8cm]{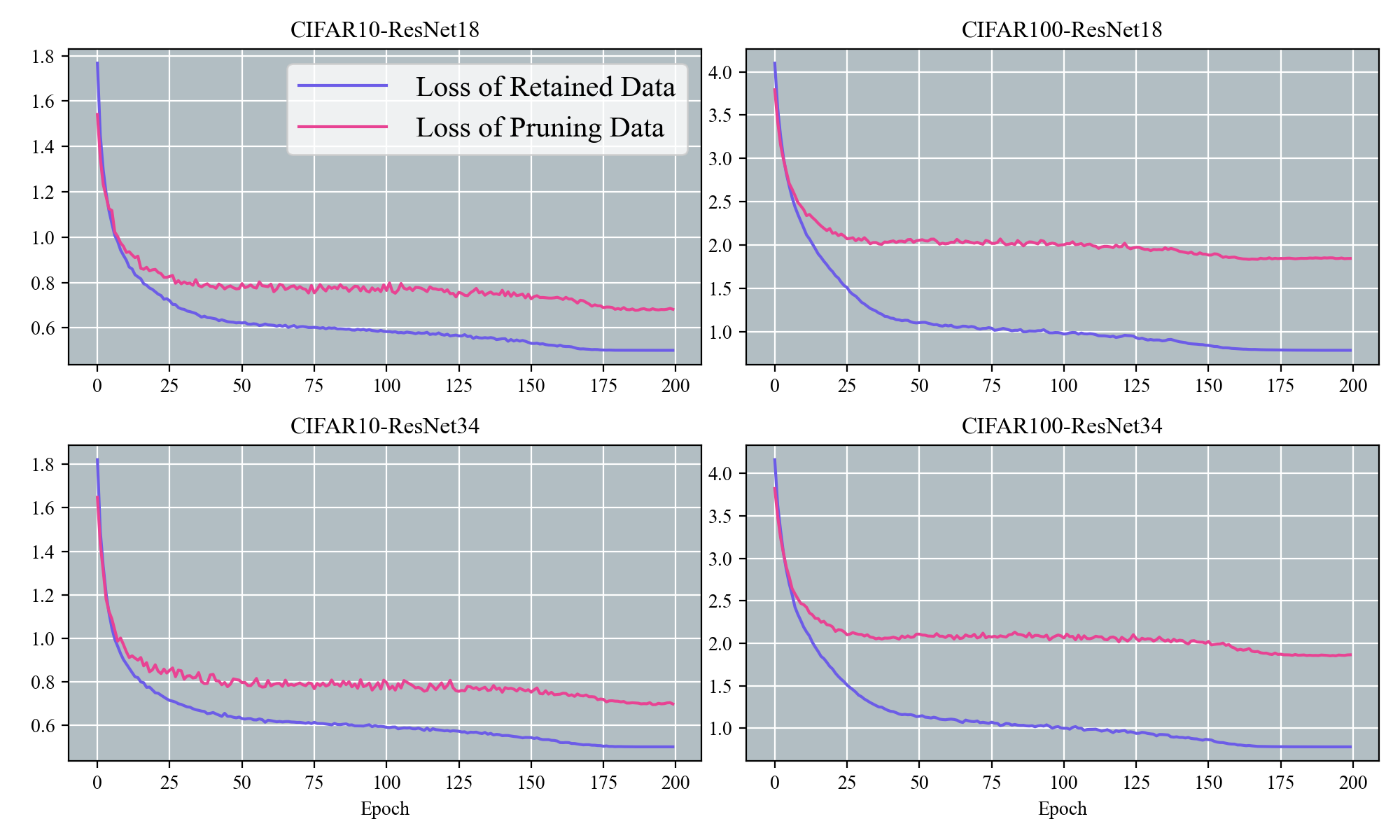}
	\caption{The variation in loss for retained data and pruned data across different datasets and architectures using 60\% static random pruning is depicted in the figure.}
	\label{fig2}
\end{figure}

We express IG formally. Let $\{\theta_1, \theta_2, \ldots, \theta_{T-1}, \theta_T\}$ be the iterations under gradient descent (GD) using data $\mathcal{D} - \hat{\mathcal{D}}$, where $\theta_t \in \mathbb{R}^{1 \times P}$. For ease of analysis, each iteration is assumed to be done using the full batch, which can be represented as:
\begin{eqnarray}\label{eq1}
	\theta_t=\theta_{t-1}-\eta\nabla_{\theta_{t-1}}L(\mathcal{D}-\hat{\mathcal{D}},\theta_{t-1})
\end{eqnarray}

Where $\nabla_{\theta_{t-1}}L(\mathcal{D}-\hat{\mathcal{D}},\theta_{t-1})\in \mathbb{R}^{1 \times P}$,$\eta$ represents the learning rate. Additionally, the training dynamics are approximated as if they are in continuous time, a method also used in other research \citep{ref7, ref8}. When \(n \gg m\), the change in \(L(\mathcal{D}, \theta_t)\) at time \(t\) is defined, and according to the chain rule, we can obtain:
\begin{eqnarray}\label{eq2}
	\frac{dL(\mathcal{D},\theta_t)}{dt}\approx-\eta\ \nabla_{\theta_t}L(\mathcal{D}-\hat{\mathcal{D}},\theta_t)\nabla_{\theta_t}L(\mathcal{D}-\hat{\mathcal{D}},\theta_t)^T
\end{eqnarray}

Based on the iteration in Equation~(\ref{eq1}), the model reduces $L(\mathcal{D}, \theta_t)$, which includes fitting the retained data in $\mathcal{D} - \hat{\mathcal{D}}$ and generalizing to pruned data in $\hat{\mathcal{D}}$. In simpler terms, the model learns the properties of the pruned data through its training on the retained data. This process enables the model to generalize effectively to data it has never seen during training, capturing the characteristics of pruned data and demonstrating its broader generalization ability.
Therefore, we express the cumulative change in \(\frac{dL(\mathcal{D}, \theta_t)}{dt}\) due to fitting with \(\mathcal{D} - \hat{\mathcal{D}}\) as:
\begin{eqnarray}\label{eq3}
	IG=\int_{1}^{T}\frac{dL(\mathcal{D},\theta_t)}{dt}dt=L(\mathcal{D},\theta_T)-L(\mathcal{D},\theta_1)
\end{eqnarray}


\begin{definition}[Internal Generalization Distance, IGD]\label{def1}
Given a dataset \(\mathcal{D}\), let \(\hat{\mathcal{D}}\) be a subset of \(\mathcal{D}\), represented as \(\hat{\mathcal{D}} \subset \mathcal{D}\). The parameters optimized by the two sets of data, \(\mathcal{D} - \hat{\mathcal{D}}\) and \(\mathcal{D}\), are denoted as \(\hat{\theta}_{-\hat{\mathcal{D}}} = \arg\min_\theta L(\mathcal{D}_{-\hat{\mathcal{D}}}, \theta)\) and \(\hat{\theta} = \arg\min_\theta L(\mathcal{D}, \theta)\) respectively. Based on Equation~(\ref{eq3}), we define the internal generalization distance from \(\hat{\theta}_{-\hat{\mathcal{D}}}\) to \(\hat{\theta}\):
\begin{eqnarray}\label{eq4}
	IGD_{-\hat{\mathcal{D}}}=L(\mathcal{D},\hat{\theta}_{-\hat{\mathcal{D}}})-L(\mathcal{D},\hat{\theta})
\end{eqnarray}
\end{definition}
According to Equation~(\ref{eq4}), both \(\hat{\theta}\) and \(\hat{\theta}_{-\hat{\mathcal{D}}}\) perfectly fit \(\mathcal{D} - \hat{\mathcal{D}}\), so IGD primarily reflects the performance of \(\hat{\theta}\) and \(\hat{\theta}_{-\hat{\mathcal{D}}}\) on \(\hat{\mathcal{D}}\). IGD characterizes the model's performance on $\mathcal{D}$ and forms a close relationship between the IG of the dataset and the model's true generalization performance. This approach achieves loss reduction for $\mathcal{D}$ and necessitates that the model possess a certain level of IG by focusing on the whole dataset. This process implicitly regularizes training, promoting more stable and effective model learning.

\begin{lemma}\label{lem1}
	Given a dataset \(\mathcal{D}\) and a pruning ratio \(s\), if a subset \(\mathcal{D}_k\) of \(\mathcal{D}\) satisfies:
	\begin{eqnarray}\label{eq5}
		\mathcal{D}_k=\arg\max_{\mathcal{D}_i}IGD_{-\mathcal{D}_i}
	\end{eqnarray}

And $s=\frac{\Vert\mathcal{D}-\mathcal{D}_k\Vert_0}{\Vert \mathcal{D} \Vert_0}$, then the parameters $\hat{\theta}_{\mathcal{D}_k}$ have the smallest upper bound on generalization error.
\end{lemma}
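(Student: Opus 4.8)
The plan is to connect the combinatorial object $\mathcal{D}_k = \arg\max_{\mathcal{D}_i} IGD_{-\mathcal{D}_i}$ to a standard generalization bound by showing that maximizing IGD is equivalent to minimizing the (true) population risk of $\hat{\theta}_{\mathcal{D}_k}$, and then invoking a uniform-convergence or stability bound. First I would unpack Definition~\ref{def1}: since both $\hat{\theta}$ and $\hat{\theta}_{-\mathcal{D}_i}$ minimize the loss on their respective training sets, and under the interpolation assumption stated after Equation~(\ref{eq4}) both drive $L(\mathcal{D}-\mathcal{D}_i,\cdot)$ to (near) zero, the quantity $IGD_{-\mathcal{D}_i} = L(\mathcal{D},\hat{\theta}_{-\mathcal{D}_i}) - L(\mathcal{D},\hat{\theta})$ reduces, up to the fixed term $L(\mathcal{D},\hat{\theta})$, to $L(\mathcal{D}_i,\hat{\theta}_{-\mathcal{D}_i}) \cdot \frac{\|\mathcal{D}_i\|_0}{\|\mathcal{D}\|_0}$ — i.e., the in-sample error of the pruned-data-trained model evaluated on the held-out block $\mathcal{D}_i$. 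Here is the crux: I would argue that, among all subsets of the prescribed size, the one maximizing IGD is the one for which $\hat{\theta}_{-\mathcal{D}_i}$ already captures the most "internal generalization," because a large $IGD_{-\mathcal{D}_i}$ means removing $\mathcal{D}_i$ barely perturbs the achievable loss on all of $\mathcal{D}$ — in other words, $\mathcal{D}_k$ is the block whose information is most redundant given $\mathcal{D}-\mathcal{D}_k$.

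Next I would set up the generalization bound itself. Write the true risk $R(\theta) = \mathbb{E}_{z\sim P}\,loss(z,\theta)$ and the empirical risk $\hat{R}_{\mathcal{D}}(\theta) = L(\mathcal{D},\theta)$. The standard decomposition gives $R(\hat{\theta}_{\mathcal{D}_k}) \le \hat{R}_{\mathcal{D}-\mathcal{D}_k}(\hat{\theta}_{\mathcal{D}_k}) + \big(R(\hat{\theta}_{\mathcal{D}_k}) - \hat{R}_{\mathcal{D}-\mathcal{D}_k}(\hat{\theta}_{\mathcal{D}_k})\big)$. The first term is the training loss, which is (near) zero by the interpolation hypothesis and identical across all choices of $\mathcal{D}_i$ of the same size; so the choice of subset only affects the second, generalization-gap term. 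I would then bound that gap by a quantity monotone in $IGD_{-\mathcal{D}_k}$: intuitively, $L(\mathcal{D},\hat{\theta}_{-\mathcal{D}_k})$ is an (almost) unbiased surrogate for $R(\hat{\theta}_{-\mathcal{D}_k})$ since it averages over the full dataset including the unseen block, and $L(\mathcal{D},\hat{\theta})$ lower-bounds the best achievable risk; hence $IGD_{-\mathcal{D}_k}$ upper-bounds the excess risk $R(\hat{\theta}_{\mathcal{D}_k}) - R(\hat{\theta})$, and the maximizer over $\mathcal{D}_i$ of $IGD$ is exactly the minimizer of the tightest such bound. Formally I would phrase this as: for every admissible $\mathcal{D}_i$, the generalization-error upper bound $B(\mathcal{D}_i)$ is a non-increasing function of a monotone rearrangement of $IGD_{-\mathcal{D}_i}$, so $\mathcal{D}_k$ attains $\min_i B(\mathcal{D}_i)$.

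The main obstacle I anticipate is making rigorous the claim that $L(\mathcal{D},\hat{\theta}_{-\mathcal{D}_i})$ behaves like a population risk, i.e., controlling the bias introduced by the fact that $\mathcal{D}_i \subset \mathcal{D}$ is used both to define $\mathcal{D}_i$ and to evaluate on it — this is the same subtlety as in cross-validation, and handling it cleanly likely requires either a data-independence assumption on the partition (the subsets $\mathcal{D}_i$ ranging over a fixed family chosen before seeing labels) or an algorithmic-stability argument bounding $|R(\hat{\theta}_{-\mathcal{D}_i}) - L(\mathcal{D},\hat{\theta}_{-\mathcal{D}_i})|$ by $O(1/\|\mathcal{D}\|_0)$ uniformly in $i$. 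A secondary gap is that "perfectly fits $\mathcal{D}-\hat{\mathcal{D}}$" is an idealization; I would either state it as a standing assumption (as the paper already does right after Equation~(\ref{eq4})) or carry an additive optimization-error slack $\varepsilon_{\mathrm{opt}}$ through the bound. With those in hand, the conclusion "$\hat{\theta}_{\mathcal{D}_k}$ has the smallest upper bound on generalization error" follows by comparing $B(\mathcal{D}_k)$ against $B(\mathcal{D}_i)$ for all other $i$ and using Equation~(\ref{eq5}).
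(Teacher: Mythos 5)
There is a genuine gap, and it is a directional one that sits at the heart of your argument. From Definition~\ref{def1}, $IGD_{-\mathcal{D}_i}=L(\mathcal{D},\hat{\theta}_{-\mathcal{D}_i})-L(\mathcal{D},\hat{\theta})$ is \emph{large} precisely when removing $\mathcal{D}_i$ \emph{greatly} degrades the achievable loss on $\mathcal{D}$; so $\mathcal{D}_k=\arg\max_{\mathcal{D}_i}IGD_{-\mathcal{D}_i}$ is the \emph{least} redundant, most informative block (consistent with Definition~\ref{def2}, which defines a redundant sample as an $\arg\min$ of IGD). Your gloss that ``a large $IGD_{-\mathcal{D}_i}$ means removing $\mathcal{D}_i$ barely perturbs the achievable loss'' and that ``$\mathcal{D}_k$ is the block whose information is most redundant'' is exactly backwards. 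The sign error propagates into your key step: you claim that $IGD_{-\mathcal{D}_k}$ upper-bounds the excess risk of $\hat{\theta}_{\mathcal{D}_k}$ and, in the same breath, that the maximizer of $IGD$ is the minimizer of that bound --- but if the bound is monotone increasing in $IGD$, its maximizer \emph{maximizes} the bound. As written, your argument would establish the opposite of Lemma~\ref{lem1}.

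The second gap is the conflation of two different models. The lemma's conclusion concerns $\hat{\theta}_{\mathcal{D}_k}$, trained \emph{on} the retained block $\mathcal{D}_k$, whereas $IGD_{-\mathcal{D}_i}$ is computed from $\hat{\theta}_{-\mathcal{D}_i}$, trained on the complement $\mathcal{D}-\mathcal{D}_i$; any proof must pass from one to the other. The paper does this explicitly (if loosely): starting from the bound $\mathcal{R}(\hat{\theta})\le\hat{\mathcal{R}}(\mathcal{D},\hat{\theta}_{\mathcal{D}_k})+\varepsilon$ in Equation~(\ref{pr_eq1}), with $\varepsilon$ constant across same-size subsets, it identifies the largest bound with $\arg\max_{\mathcal{D}_i}L(\mathcal{D},\hat{\theta}_{\mathcal{D}_i})-L(\mathcal{D},\hat{\theta})$, then invokes a complement-swap duality (``since $\mathcal{D}_k$ and $\mathcal{D}-\mathcal{D}_k$ are mutually exclusive'') to convert that $\arg\max$ over models trained on $\mathcal{D}_i$ into an $\arg\min$ of $IGD_{-\mathcal{D}_i}$ over models trained without $\mathcal{D}_i$, and concludes by contraposition. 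That duality is the load-bearing (and only empirically justified) step; your proposal never states it and instead silently substitutes $\hat{\theta}_{\mathcal{D}_k}$ for $\hat{\theta}_{-\mathcal{D}_k}$ inside the excess-risk claim. Your observations about the cross-validation-style bias of $L(\mathcal{D},\hat{\theta}_{-\mathcal{D}_i})$ and the interpolation idealization are legitimate refinements the paper also leaves informal, but they do not repair these two core problems.
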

\begin{proof}\label{key}
	It is assumed that the parameters \(\hat{\theta}_{\mathcal{D}_k}\) have the largest upper bound on generalization error. Previous work \cite{ref52} expressed the upper bound on generalization error, which in our case can be represented as:
	\begin{eqnarray}\label{pr_eq1}
		\mathcal{R}(\hat{\theta})\le\hat{\mathcal{R}}(\mathcal{D},{\hat{\theta}}_{\mathcal{D}_k})+\varepsilon
	\end{eqnarray}

Where \(\mathcal{R}(\hat{\theta})\) is the expected loss, and \(\hat{\mathcal{R}}(\mathcal{D}, {\hat{\theta}}_{\mathcal{D}_k})\) is the empirical risk of \({\hat{\theta}}_{\mathcal{D}_k}\), which is our fitting loss. \(\varepsilon\) is a coefficient related to the model size and the size of the retained dataset. When \(\hat{\theta}_{D_k}\) has the largest upper bound on generalization error, then \(\hat{\mathcal{R}}(\mathcal{D}, {\hat{\theta}}_{{\mathcal{D}}_k})\) is also at its maximum. It can be expressed as:
\begin{eqnarray}\label{pr_eq2}
	\mathcal{D}_k=\arg\max_{\mathcal{D}_i}L(\mathcal{D},\hat{\theta}_{\mathcal{D}_i})-L(\mathcal{D},\hat{\theta})
\end{eqnarray}

Based on empirical observations, since \(\hat{\mathcal{D}}_k\) and \(\mathcal{D} - \hat{\mathcal{D}}_k\) are mutually exclusive, Equation~\ref{pr_eq2} can be expressed as:
\begin{eqnarray}
	\mathcal{D}_k=\arg\min_{\mathcal{D}_i}L(\mathcal{D},\hat{\theta}_{-\mathcal{D}_i})-L(\mathcal{D},\hat{\theta})
\end{eqnarray}

Therefore, based on the rule of contraposition, Lemma~\ref{lem1} can be proven.
\end{proof}
Curriculum learning \cite{ref44} demonstrates that samples can be ranked, allowing us to define redundant sample based on IGD.
\begin{definition}[Redundant sample]\label{def2}
	Given a dataset \(\mathcal{D}\), and a sample \(z_k\) from \(\mathcal{D}\), represented as \(z_k \in \mathcal{D}\), if \(z_k\) satisfies:
	\begin{eqnarray}\label{eq6}
		z_k=\arg\min_{z_i}IGD_{-z_i}
	\end{eqnarray}

Then we call \(z_k\) a redundant sample of \(\mathcal{D}\).
\end{definition}
As seen from Equation~(\ref{eq6}), the process of selecting \(z_k\) is not straightforward. It involves a long-term optimization process for two networks, which is costly. Specifically, for a dataset of \(n\) samples, calculating \(IGD_{-z_k}\) for each sample has a complexity of \(\mathcal{O}(nT)\), where \(T\) is the number of training iterations required to obtain \(\hat{\theta}_{-z_k}\). For large-scale networks and datasets with tens of thousands of samples, this overhead is significant and cannot be ignored.
To solve this problem, a mask was added for each sample. Equation~(\ref{eq4}) can be expressed as:
\begin{eqnarray}\label{eq7}
	IGD_{-z_k}=L(\delta_k=0)-L(\delta_k=1)
\end{eqnarray}

Where \(\delta_k \in \{0,1\}\), \(L(\delta_k) = L(\mathcal{D}, \arg\min_\theta L(\mathcal{D}_{-z_k} + \delta_i z_k, \theta))\).

According to Equation~(\ref{eq7}), \(IGD_{-z_k}\) depicts the change in IG when the mask is set to 0 or 1. In the model's forward propagation, we can assume \(\delta_k\) is no longer a discrete value, a similar assumption to that used in \cite{ref4}.

For a sample \(z_k\), the impact of \(z_k\) on IG can be approximated using Taylor expansion:
\begin{eqnarray}\label{eq8}
	L(\delta_k=0)-L(\delta_k=1)\approx-\frac{\partial L(\delta_k)}{\partial\delta_k}|_{\delta_k=1}
\end{eqnarray}

Therefore, the change in the mask \(\delta_k\), i.e., the impact of removing \(z_k\) on IG, is represented as:
\begin{eqnarray}\label{eq9}
	\begin{split}
	\mathcal{I}(z_k) =&\left|\frac{\partial L(\delta_k)}{\partial\delta_k}|_{\delta_k=1}\right|\\
	=&\left|\lim_{\lambda \rightarrow0}{\frac{L(\delta_k=1)-L(\delta_k=1+\lambda)}{\lambda}|_{\delta_k=1}}\right|
	\end{split}
\end{eqnarray}

As seen from Equation~(\ref{eq9}), rapid estimation of IGD is achieved through small changes in \(\delta_k\).

\subsection{Dynamic Pruning}\label{sub3.3}
According to Equation~(\ref{eq9}), IGD serves as a form of post-training knowledge and requires optimizing the loss function \(L(\mathcal{D}_{-z_k}, \theta)\) to identify redundant samples. Our aim is to dynamically identify these redundant samples early or during training to accelerate the model's training process. By doing so, training can be streamlined and efficiency improved while maintaining model performance.

Following the continuous training dynamics described in Section~\ref{sub3.2}, under the optimization of \(\mathcal{D}-z_k\), the IGD at time \(t\) is:
\begin{eqnarray}\label{eq10}
	IGD_{-z_k}^{(t)}=L(\mathcal{D},\theta_t)-L(\mathcal{D},\hat{\theta})
\end{eqnarray}

Where \(\hat{\theta} = \arg\min_\theta L(\mathcal{D}, \theta)\), and \(\theta_t\) is trained using data \(\mathcal{D} - z_k\). We represent the instantaneous ratio of change of \(IGD_{-z_k}^{(t)}\) at time \(t\):
\begin{eqnarray}\label{eq11}
	\mathcal{V}_t=\frac{d(IG D_{-z_k}^{(t)})}{dt}|_{t=t}
\end{eqnarray}
\begin{lemma}\label{lem2}
	For early training time \(t\) and the end of training time \(T\), there exists a constant \(C\) such that:
	\begin{eqnarray}\label{eq12}
		\left|\mathcal{V}_t-\mathcal{V}_T\right|\le\left|\nabla_{\theta_t}L(\mathcal{D},\theta_t)\right|C\
	\end{eqnarray}
	For different optimization methods, the value of \(C\) varies.
\end{lemma}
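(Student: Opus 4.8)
The plan is to differentiate $IGD_{-z_k}^{(t)}$ along the continuous training trajectory so as to rewrite both $\mathcal{V}_t$ and $\mathcal{V}_T$ as inner products of gradients, and then to use the fact that at the end of training the model has converged on the retained set $\mathcal{D}-z_k$. Since $L(\mathcal{D},\hat{\theta})$ in Equation~(\ref{eq10}) does not depend on $t$, we have $\mathcal{V}_t=\tfrac{d}{dt}L(\mathcal{D},\theta_t)$. Applying the chain rule exactly as in Equation~(\ref{eq2}) and substituting the continuous-time gradient flow $\dot{\theta}_t=-\eta\,\nabla_{\theta_t}L(\mathcal{D}-z_k,\theta_t)$ induced by Equation~(\ref{eq1}) with retained data $\mathcal{D}-z_k$ gives
\[
	\mathcal{V}_t=-\eta\,\nabla_{\theta_t}L(\mathcal{D},\theta_t)\,\nabla_{\theta_t}L(\mathcal{D}-z_k,\theta_t)^{T}.
\]

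Next I would evaluate this expression at $t=T$. Because $\theta_T$ is, by definition of the end of training, the minimizer $\hat{\theta}_{-z_k}=\arg\min_\theta L(\mathcal{D}-z_k,\theta)$, stationarity forces $\nabla_{\theta_T}L(\mathcal{D}-z_k,\theta_T)=0$, hence $\mathcal{V}_T=0$. Therefore $\left|\mathcal{V}_t-\mathcal{V}_T\right|=\left|\mathcal{V}_t\right|=\eta\,\bigl|\nabla_{\theta_t}L(\mathcal{D},\theta_t)\,\nabla_{\theta_t}L(\mathcal{D}-z_k,\theta_t)^{T}\bigr|$, and one application of Cauchy--Schwarz yields
\[
	\left|\mathcal{V}_t-\mathcal{V}_T\right|\le\bigl|\nabla_{\theta_t}L(\mathcal{D},\theta_t)\bigr|\cdot\eta\,\bigl|\nabla_{\theta_t}L(\mathcal{D}-z_k,\theta_t)\bigr|.
\]
It then remains to absorb the trailing factor into the advertised constant: under the standard assumption that the gradients, equivalently the update magnitudes $|\dot{\theta}_t|$, encountered along the trajectory are bounded, one sets $C=\sup_s\eta\,\bigl|\nabla_{\theta_s}L(\mathcal{D}-z_k,\theta_s)\bigr|$, which is finite. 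This is also where the dependence on the optimizer enters: replacing plain gradient flow by a preconditioned update $\dot{\theta}_t=-M_t\nabla_{\theta_t}L(\mathcal{D}-z_k,\theta_t)$ turns the second factor into $|M_t\nabla_{\theta_t}L(\mathcal{D}-z_k,\theta_t)|$ and hence changes the value of $C$; for normalized updates such as Adam the bound is essentially automatic, whereas for SGD it relies on the gradient-boundedness hypothesis. Note the argument uses $t$ only through this supremum, so ``early training time'' is merely the regime in which the bound is of interest, not a hypothesis the proof exploits.

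The step I expect to be the main obstacle is not the algebra but justifying that $C$ is a genuine finite constant: this needs either a global smoothness/Lipschitz assumption on $L$, confinement of the trajectory $\{\theta_t\}$ to a compact set, or an a priori bound on gradient norms during training, one of which ought to be stated explicitly. A secondary subtlety is the identification $\mathcal{V}_T=0$, which presumes that $T$ corresponds to exact convergence on $\mathcal{D}-z_k$; if only approximate convergence is available, the same steps give $\left|\mathcal{V}_T\right|\le\eta\,\varepsilon'\,|\nabla_{\theta_T}L(\mathcal{D},\theta_T)|$ with $\varepsilon'$ a residual-gradient bound, and a triangle-inequality step still delivers an inequality of the claimed form once $\varepsilon'$ is folded into $C$.
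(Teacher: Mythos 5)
Your proposal follows essentially the same route as the paper: write $\mathcal{V}_t$ via the chain rule as a gradient inner product along the flow, observe that convergence at $T$ kills the $\mathcal{V}_T$ term, and peel off $\left|\nabla_{\theta_t}L(\mathcal{D},\theta_t)\right|$ by Cauchy--Schwarz with the remaining factor absorbed into $C$. You are in fact more careful than the paper, which sets $C=\eta\left|\tfrac{d\theta_t}{dt}\right|$ (a $t$-dependent quantity) without the supremum or boundedness hypothesis you correctly flag as necessary for $C$ to be a genuine constant.
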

\begin{proof}
	\(\mathcal{V}_t - \mathcal{V}_T\) can be expressed as:
	\begin{eqnarray}\label{pr_eq3}
		\begin{split}
		\mathcal{V}_t-\mathcal{V}_T=&\eta \nabla_{\theta_t}L(\mathcal{D},\theta_t)(\frac{d\theta_t}{dt})^T|_{t=T}\\
		-&\eta \nabla_{\theta_t}L(\mathcal{D},\theta_t)(\frac{d\theta_t}{dt})^T|_{t=t}
	\end{split}
	\end{eqnarray}

At time \(T\), which is the end of training and the point at which the model has converged, Equation~\ref{pr_eq3} can be rewritten as:
\begin{eqnarray}
	\mathcal{V}_t-\mathcal{V}_T=-\eta \nabla_{\theta_t}L(\mathcal{D},\theta_t)(\frac{d\theta_t}{dt})^T|_{t=t}
\end{eqnarray}

Therefore, letting \(C = \eta\vert \frac{d\theta_t}{dt}\vert\), Lemma~\ref{lem2} can be derived.
\end{proof}

Based on Lemma~\ref{lem2}, removing sample \(z_k\) in the early stages of training has a bounded impact on the change in \(IGD_{-z_k}^{(t)}\) later in training. This implies that the optimization objective of IGD has a prior effect. Consequently, dynamic pruning can be implemented using our scoring method. Equation~(\ref{eq8}) can be expressed as:
\begin{eqnarray}\label{eq13}
	\mathcal{I}_{(t)}(z_k)=\left|\frac{\partial L(\mathcal{D},\theta_t)}{\partial\delta_k}|_{\delta_k=1}\right|
\end{eqnarray}

In this work, our sample scoring is obtained through multiple pruning rounds. The calculation of mask gradients can be quite noisy \cite{ref9}, particularly in transformer models \cite{ref10}, so we use Polyak averaging:
\begin{eqnarray}\label{eq14}
	score^{(t)}(z_k)=\alpha\ast score^{(t-1)}(z_k)+(1-\alpha)\ast\mathcal{I}_{(t)}(z_k)
\end{eqnarray}
\begin{definition}[Dynamic redundant subset]\label{def3}
	Given a pruning ratio \(s_t\) at time \(t\), an original dataset \(\mathcal{D} = \{z_1, \ldots, z_n\}\), and a subset $\hat{\mathcal{D}} = \{\hat{z}_1, \ldots, \hat{z}_m\}$ of $\mathcal{D}$, and model parameters \(\theta_t\), if the following conditions are met:
	\begin{eqnarray}
		\forall\widehat{z_k}\in\hat{\mathcal{D}},\forall z_i\in\mathcal{D}-\hat{\mathcal{D}},score^{(t)}(\hat{z}_k)\le score^{(t)}(z_i)
	\end{eqnarray}

At time \(t\), \(\hat{\mathcal{D}}\) is called a redundant subset of \(\mathcal{D}\), and \(\mathcal{D} - \hat{\mathcal{D}}\) is called a non-redundant subset of \(\mathcal{D}\).
\end{definition}

\subsection{Pruning Ratio Scheduling}
Unlike other work involving soft and hard pruning \citep{ref5}, the aim is to optimize the approximation in Equation~(\ref{eq9}) by scheduling the pruning ratio. It was found that using progressive pruning can save more resources while maintaining performance.

To avoid confusion with symbols, the model parameters were denoted as \(w\), and \(\hat{w}\) as the optimized parameters. An experiment was conducted in which a certain amount of random pruning was performed at the \(t\) epoch during the entire training process, using a pruning ratio of \(s\) (the number of pruning sessions is typically 10\% of the total number of epochs). The optimized parameters in this scenario were expressed as \(\hat{w}_t^s\). As depicted in Figure~\ref{fig3}, the change in \(L(\mathcal{D}, \hat{w}_t^s)\) over \(t\) was reported across various model architectures. It was found that \(\frac{dL(\mathcal{D}, \hat{w}_t^s)}{dt} > 0\), and from previous work \citep{ref5, ref6}, \(\frac{dL(\mathcal{D}, \hat{w}_t^s)}{ds} > 0\). It can be deduced that \(\frac{ds}{dt} > 0\). In simple terms, during the entire model optimization process, the pruning ratio \(s\) should gradually increase as the pruning time \(t\) progresses in order to align with training dynamics.
\begin{figure}[H]
	\centering
	\includegraphics[width=8cm]{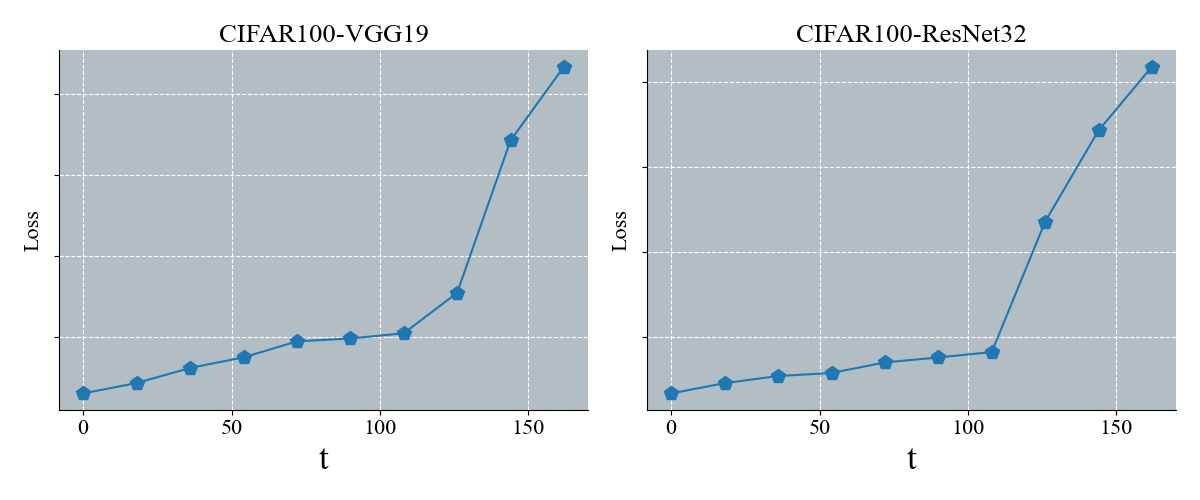}
	\caption{The graph shows the change in loss when training the VGG and ResNet models multiple times, performing 18 pruning sessions in each training run starting from epoch $t$.}
	\label{fig3}
\end{figure}

A simple exponential decay schedule was discovered and is called the "roller-coaster" schedule. It is defined as follows:
\begin{eqnarray}\label{eq15}
	r_t=exp\{(t/T)^\beta log(1-s)\}
\end{eqnarray}

Where \(s\) represents the pruning ratio, \(r_t\) represents the current data retention ratio, and \(\beta\) is a controllable hyperparameter. Small pruning steps enable reliable utilization of the approximation in section 3.2. Our schedule is a fast-to-slow approach, specifically with higher pruning ratios in the early stages and slower ratios in the later stages of training. This phenomenon intensifies as \(\beta\) decreases. The specific changes in pruning ratios can be seen in Appendix B of the supplementary materials.

A fast-to-slow pruning schedule is believed to be capable of removing a large number of samples without loss during the early stages of training. In the later stages, however, pruning boundaries may change drastically \cite{ref6}, necessitating a slower pruning ratio at that time.
\subsection{Unbiased Loss}
When training the network using pruned data, there can be an issue of insufficient iterations. According to \cite{ref5}, the gradient direction of the retained data is not consistent with that of the original data. Formally, we express this as:
\begin{eqnarray}\label{eq16}
	\nabla_{\theta_{-\mathcal{\hat{\mathcal{D}}}}}L(\mathcal{D},\theta_{-\mathcal{\hat{\mathcal{D}}}})\neq c_1\nabla_{\theta}L(\mathcal{D},\theta)
\end{eqnarray}

Where \( c_1 \) is a scalar.

Our report refutes this conclusion. As shown in Figure~\ref{fig4}, the Pearson correlation coefficient (PCC) of the two gradients was examined at different stages of training. It is evident that in the early stages of training, the two gradients are highly linearly correlated. Therefore, it is believed that directly applying a linear transformation to the loss in the early stages of training can facilitate effective gradient descent, expressed as:
\begin{eqnarray}\label{eq17}
	\hat{L}(\mathcal{D}-\hat{\mathcal{D}},\theta)=L(\mathcal{D}-\hat{\mathcal{D}},\theta)/r_t
\end{eqnarray}
\begin{figure}[H]
	\centering
	\includegraphics[width=7cm]{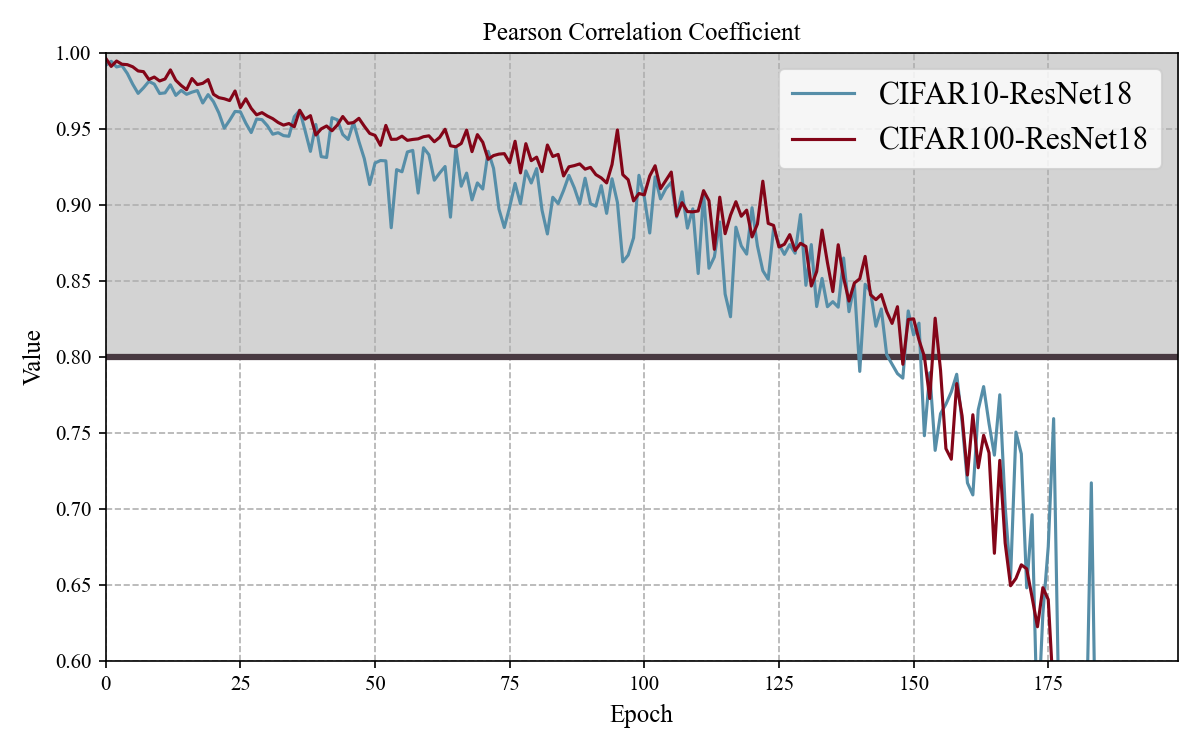}
	\caption{Using our method, the pearson correlation coefficient values of the gradients between retained data and full dataset across different datasets were calculated.}
	\label{fig4}
\end{figure}

The TED dataset pruning method is described in Algorithm 1.
 \begin{algorithm}[H]
	\caption{TED Data Pruning Process}
	
	\label{alorithm pre}
	\begin{algorithmic}[1]
		\REQUIRE Dataset $\mathcal{D}$ = $\{{z_1},...,{z_n}\}$, Epoch $T$, Pruning Ratio $s$, Initialized Model $\theta$, Initialized Redundant Dataset $\hat{\mathcal{D}}_0=\emptyset$.
		\hrule
		\vspace{0.1cm}
		\FOR{$t$ from 0 to $T-1$}
		\STATE Calculate keep ratio $r_t$ with Equation~(\ref{eq15})
		\STATE Get train sequence ${\mathcal{D}_t}=\mathcal{D}-\hat{\mathcal{D}}_t=\{B_0,B_1,,,B_{b_t-1}\}$
		\FOR{$j$ from 0 to $b_t-1$}
		\STATE Get loss $L(B_j,\theta_t^j)$
		\STATE Correct loss with Equation~(\ref{eq17}): $\hat{L}(B_j,\theta_t^j)$
		\STATE Update score with Equation~(\ref{eq14})
		\STATE Update model with optimizer
		\ENDFOR
		\STATE Get pruned dataset $\hat{\mathcal{D}}_{t+1}$
		
		\ENDFOR
		\vspace{0.1cm}
	\end{algorithmic}
\end{algorithm}
\section{Experiments}
In the following sections, the effectiveness of the theoretical results and the proposed dataset pruning method are validated through experiments. In Section 4.1, TED is compared with several SOTA methods on image classification tasks. In Section 4.2, the efficient performance of TED is demonstrated on natural language understanding tasks. In Section 4.3, pre-trained LLMs are fine-tuned using a low-rank adaptation method and compared with other methods. In Section 4.4, the effectiveness of the proposed theoretical results is verified through a series of ablation experiments. In Section 4.5, TED's generalization performance is analyzed using a one-dimensional linear interpolation method. To eliminate the impact of randomness, each experiment was run five times and the average taken. Implementation details can be found in Appendix A of the supplementary materials.

\subsection{Image Classification Task}
\textbf{CIFAR.} We conducted comparisons on the CIFAR-10 and CIFAR-100 datasets, presenting recent methods from the past few years, with the results shown in Table~\ref{table1_cifar10} and Table~\ref{table1_cifar100}. TED achieved lossless performance on CIFAR-100, even outperforming the baseline slightly. Notably, TED significantly surpassed previous static pruning methods and led other dynamic pruning methods at high pruning ratios. Additionally, Figure~\ref{fig4} and \ref{fig5} show TED's variation curves under different pruning ratios. Surprisingly, TED did not perform as well as expected at low pruning ratios. It is believed that the effect of IG is not significant at low pruning ratios, and the small size of the pruned dataset accentuates a small portion of noise \cite{ref21} and outliers \cite{ref33}, making a focus on the pruned data less efficient. However, as the pruning ratio increases, TED nearly matches the performance of the baseline and is more efficient than other methods.

\begin{table}[b]
	\begin{center}
	{\caption{The comparison results of the TED method with other methods on the CIFAR-10 dataset, using the ResNet-18 model. Pruning methods are categorized into static pruning and dynamic pruning. "Random*" refers to random dynamic dataset pruning. "Baseline" refers to the performance without pruning.}}
	\label{table1_cifar10}
	\resizebox{\columnwidth}{!}{
		\begin{tabular}{ccccc}
			\toprule
			& Dataset & \multicolumn{3}{c}{CIFAR10} \\
			\cmidrule(lr){1-2} \cmidrule(lr){3-5}
			&	Pruning Ratio \% & 30 & 50 & 70 \\
			\cmidrule(lr){1-2} \cmidrule(lr){3-5} 
			\multirow{11}{*}{\begin{sideways}Static\end{sideways}}
			&Random & 94.6 & 93.3 & 90.2  \\
			&CD\cite{ref13} & 95.0 & 94.3 & 90.8  \\
			&Least Confidence\cite{ref29} & 95.0 & 94.5 & 90.3  \\
			&Margin\cite{ref29} & 94.9 & 94.3 & 90.9  \\
			
			&GraNd-4\cite{ref27} & 95.3 & 94.6 & 91.2  \\
			&DeepFool\cite{ref38} & 95.1 & 94.1 & 90.0  \\
			&Craig\cite{ref52} & 94.8 & 93.3 & 88.4  \\
			&Glister\cite{ref12}  & 95.2 & 94.0 & 90.9  \\

			&EL2N-20\cite{ref11} & 95.3 & 95.1 & 91.9  \\
			&DP\cite{ref1} & 94.9 & 93.8 & 90.8  \\
			\midrule
			\multirow{5}{*}{\begin{sideways}Dynamic\end{sideways}}
			&Random* & 94.8 & 94.5 & 93.0  \\
			&$\epsilon$-greedy\cite{ref6}  & 95.2 & 94.9 & 94.1  \\
			&UCB\cite{ref6} & 95.3 & 94.7 & 93.9  \\
			&InfoBatch\cite{ref5} & \textbf{95.6} & 95.1 & 94.7 \\
			&TED(ours) &95.51\textsubscript{$\pm0.14$}  &\textbf{95.31}\textsubscript{$\pm0.09$}  &\textbf{94.94}\textsubscript{$\pm0.19$}    \\
			\midrule
			&Baseline & \multicolumn{3}{c}{95.6$\pm0.1$} \\				
			\bottomrule
		\end{tabular}
	}
	\end{center}
\end{table}

\begin{table}[b]
	\begin{center}
	\caption{The comparison results of the TED method with other methods on the CIFAR-100 dataset, using the ResNet-18 model. Pruning methods are categorized into static pruning and dynamic pruning. "Random*" refers to random dynamic dataset pruning. "Baseline" refers to the performance without pruning.}
	\label{table1_cifar100}
	\resizebox{\columnwidth}{!}{
		\begin{tabular}{ccccc}
			\toprule
			& Dataset  & \multicolumn{3}{c}{CIFAR100} \\
			\cmidrule(lr){1-2} \cmidrule(lr){3-5} 
			&	Pruning Ratio \%  & 30 & 50 & 70 \\
			\cmidrule(lr){1-2} \cmidrule(lr){3-5}
			\multirow{11}{*}{\begin{sideways}Static\end{sideways}}
			&Random & 73.8 & 72.1 & 69.7 \\
			&CD\cite{ref13}  & 74.2 & 72.3 & 70.3 \\

			&Least Confidence\cite{ref29}  & 74.2 & 72.3 & 69.8 \\
			&Margin\cite{ref29}  & 74.0 & 72.2 & 70.2 \\
			
			&GraNd-4\cite{ref27} & 74.6 & 71.4 & 68.8 \\
			&DeepFool\cite{ref38}  & 74.2 & 73.2 & 69.8 \\
			&Craig\cite{ref52}  & 74.4 & 71.9 & 69.7 \\
			&Glister\cite{ref12}   & 74.6 & 73.2 & 70.4 \\

			&EL2N-20\cite{ref11}  & 77.2 & 72.1 & - \\
			&DP\cite{ref1}  & 77.2 & 73.1 & - \\
			\midrule
			\multirow{5}{*}{\begin{sideways}Dynamic\end{sideways}}
			&Random*  & 77.3 & 75.3 & - \\
			&$\epsilon$-greedy\cite{ref6}   & 76.4 & 74.8 & - \\
			&UCB\cite{ref6}  & 77.3 & 75.3 & - \\
			&InfoBatch\cite{ref5}  & 78.2 & 78.1 & 76.5 \\
			&TED(ours)  &\textbf{78.26}\textsubscript{$\pm0.11$}  &\textbf{78.14}\textsubscript{$\pm0.21$} &\textbf{76.83}\textsubscript{$\pm0.17$}  \\
			\midrule
			&Baseline  & \multicolumn{3}{c}{78.2$\pm0.1$} \\				
			\bottomrule
		\end{tabular}
	}
	\end{center}
\end{table}

\textbf{ImageNet-1K.} To explore the performance of TED on large-scale datasets, we conducted experiments on ImageNet-1K using the ResNet-50 model. The results are reported in Table~\ref{table2}, and it is clear from the table that TED continues to lead. It is worth noting that because ImageNet-1K cannot be fully fitted on ResNet-50, none of these methods can achieve lossless performance. This will be a key focus of our future research.

\begin{table}[H]
	\begin{center}
	\caption{TED was evaluated alongside other dynamic pruning methods on ImageNet-1k using ResNet50. "Baseline" refers to the performance without pruning.}
	\label{table2}
		\begin{tabular}{ccc}
			\toprule
			\multicolumn{3}{c}{ImageNet-1k-ResNet50}\\
			\midrule
			Pruning Ratio \% & 30 & 70 \\
			\midrule
			UCB&72.97&68.59\\
			$\epsilon$-greedy&74.10&69.28\\
			Infobatch&74.59&72.51\\

			TED(ours)&\textbf{74.67}\textsubscript{$\pm$0.11}&\textbf{72.68}\textsubscript{$\pm$0.17}\\
			\midrule
			Baseline&\multicolumn{2}{c}{75.28}\\
			\bottomrule
		\end{tabular}
	\end{center}
\end{table}

\begin{figure}[H]
	\centering
	\includegraphics[width=7cm]{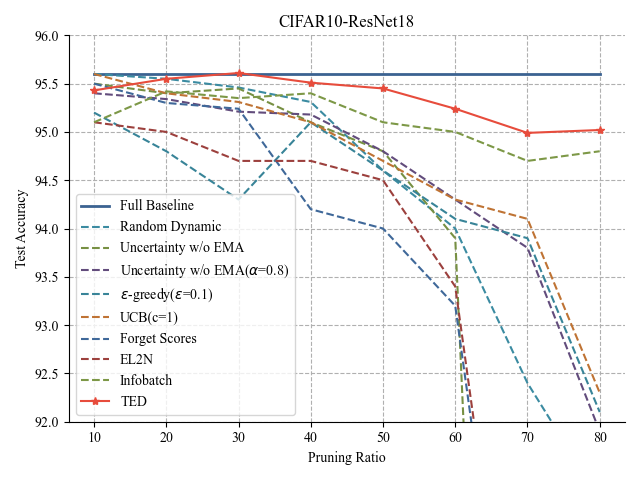}
	\caption{The graph shows how various methods perform on CIFAR-10 using ResNet-18 as the pruning ratio changes.}
	\label{fig5}
\end{figure}
\begin{figure}[H]
	\centering
	\includegraphics[width=6.5cm]{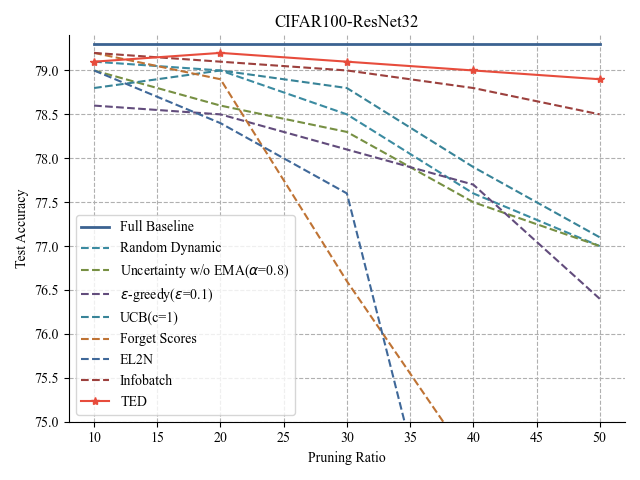}
	\caption{The graph shows how various methods perform on CIFAR-100 using ResNet-32 as the pruning ratio changes.}
	\label{fig6}
\end{figure}

\subsection{Natural Language Understanding Tasks}
In this section, TED's performance in transformer models is examined. Specifically, the BERT-base pre-trained model \cite{ref55} is fine-tuned on GLUE tasks \cite{ref46}. This is the first known exploration of dataset pruning on the BERT-base model. Larger datasets such as MNLI (393k) and QQP (363k) were selected for demonstration. At a 70\% pruning ratio, the results are reported in Table~\ref{table3}. Notably, the evaluation criteria are consistent with \cite{ref53}. More comparative data can be found in Appendix C of the supplementary materials.
\begin{table}[H]
	\begin{center}
	\caption{Comparison of different dynamic pruning methods was conducted using the BERT-base model on GLUE, with a pruning ratio of 70\%. Our evaluation criteria are consistent with \cite{ref53}.}
	\label{table3}
	\resizebox{\columnwidth}{!}{
		\begin{tabular}{l|cccc}
			\toprule
			Dataset& SST-2  & QNLI & MNLI & QQP \\
			\midrule
			Whole Dataset & 92.78  & 89.15 & 84.37 & 91.10 \\
			\midrule
			Infobatch &  92.52  & 89.12 & 82.18 & 91.02 \\
			$\epsilon-$greedy  & 92.63  & 89.54 & 82.53 & 90.68 \\
			UCB  & 92.86  & 88.23 & 79.94 & 89.21 \\
			TED(ours)  & \textbf{93.01}\textsubscript{$\pm0.34$}  &\textbf{90.16}\textsubscript{$\pm0.56$} & \textbf{83.10} \textsubscript{$\pm0.46$}& \textbf{91.24} \textsubscript{$\pm0.29$}\\
			\bottomrule
		\end{tabular}
	}
	\end{center}
\end{table}
\subsection{Fine-tuning Large Language Models}
Recently, large language models have demonstrated remarkable performance \citep{ref50,ref48}. They are typically fine-tuned efficiently for downstream tasks using low-rank adaptation algorithms (LoRA) \cite{ref47} in half-precision (FP16) settings. In our experiments, the model used the LLaMA2-7B model \cite{ref50} and was fine-tuned using the LoRA algorithm, with model evaluation conducted using MMLU \cite{ref51}.
Our experimental results are reported in Table~\ref{table4}. Notably, English instructions generated by GPT-4 \cite{ref48} based on Alpaca prompts were used. Even after removing 30\% of the samples, TED still achieves lossless performance and even slightly outperforms the baseline, indicating that TED is effective in fine-tuning LLMs. This holds practical significance for future LLM research.

\begin{table}[H]
	\begin{center}
	\caption{The fine-tuning results using LoRA on LLaMA2-7B with data generated from English instructions using GPT-4 based on Alpaca prompts, and a pruning ratio of 30\%. "Zero-shot" refers to the pre-trained model without fine-tuning. "Baseline" refers to the performance without pruning.}
	\label{table4}
	\resizebox{\columnwidth}{!}{
		\begin{tabular}{c|ccccc}
			\toprule
			\multirow{2}{*}{Method}&\multicolumn{5}{c}{MMLU}\\
			
			& STEM & Social Sciences & Humanities & Other & AVG. \\ 
			
			\midrule
			Zero-shot & 33.31 & 46.78 & 38.76 & 45.04 & 40.79 \\ 
			Baseline & 35.30 & 47.24 & 41.88 & 50.25 & 43.58 \\ 
			Infobatch   & 34.77 & 47.93 & 40.81 & 49.69 & 43.12\\ 
			\midrule
			
			TED(ours) & 35.14 & 47.44 & 41.95 & 49.90 & 43.60 \\ \bottomrule
	\end{tabular}}
	\end{center}
\end{table}

\subsection{Ablation Experiments}
To further investigate the characteristics and underlying superior performance of TED, we conducted extensive ablation experiments.

\textbf{Optimization Objective.} To verify the performance of the IGD optimization objective, we replaced our scoring function with the current fitting loss, which is also the main evaluation metric of previous work \citep{ref5,ref6,ref27}. Our results are reported in Table~\ref{table5}. The results indicate that the TED method using IGD as the optimization objective performs exceptionally well, suggesting that the IGD optimization objective is distinct from the current fitting loss objective. The current fitting loss focuses solely on the samples that are difficult to learn and repeatedly fits them, while IGD not only helps improve the model's fitting ability on the retained data but also enhances the model's true generalization performance through internal generalization.

\begin{table}[H]
	\begin{center}
	\caption{The performance gap exhibited by TED based on optimization objectives using loss and IGD.}
	\label{table5}
	\resizebox{\columnwidth}{!}{
		\begin{tabular}{ccccc}
			\toprule
			Dataset & \multicolumn{2}{c}{CIFAR10-ResNet18} & \multicolumn{2}{c}{CIFAR100-ResNet32} \\
			\cmidrule(lr){1-1} \cmidrule(lr){2-3} \cmidrule(lr){4-5} 
			
			Pruning Ratio \% & 50 & 70 & 50 & 70 \\
			\cmidrule(lr){1-1} \cmidrule(lr){2-3} \cmidrule(lr){4-5} 
			TED base on Loss & 95.27 & 94.49 & 78.31 & 78.09\\
			TED(ours)&\textbf{95.31}&\textbf{94.94}&\textbf{78.92}&\textbf{78.55}\\
			\bottomrule
		\end{tabular}
	}
	\end{center}
\end{table}

\textbf{Prior effect of IGD.} In Section 3.3, it was verified that IGD possesses a certain degree of prior effect. Here, the impact of this prior effect on overall performance is explored. Similar to other static pruning methods, a trained surrogate model was used to filter data and train the initialized model. The results are reported in Table~\ref{table6}. Surprisingly, TEDS did not exhibit outstanding performance. These findings align with prior research \citep{ref43, ref44}, which suggests that data selected at later stages may not be suitable for the entire training cycle. This indirectly demonstrates that TED is capable of retaining samples that generalize well throughout the training process according to training dynamics. Moreover, it is worth noting that according to other static pruning methods, TEDS leads at high pruning ratios, indicating that IGD optimization objective is very effective.

\textbf{Pruning ratio scheduling.} In dynamic pruning, progressive pruning is believed to align more closely with training dynamics, and a roller-coaster style pruning ratio schedule was designed. In this experiment, it was compared with other commonly used pruning ratio schedules, such as linear decay and cosine decay. Additionally, to compare acceleration efficiency across different computing platforms, save ratio (SR) was used to evaluate acceleration effects. SR is defined as follows, where TFS represents total forward samples and RFS represents realistic forward samples.
\begin{eqnarray}
	SR=\frac{TFS-RFS}{TFS}
\end{eqnarray}

Our results are reported in the Figure~\ref{fig7}. From the figure, it is clear that linear and cosine decay perform similarly to TED in terms of performance, indicating that a progressive pruning approach from high to low pruning ratios can adapt to training dynamics and achieve good performance. As expected, TED's SR outperforms these progressive pruning strategies, suggesting that our roller-coaster pruning ratio schedule can maintain efficient performance while reducing iteration counts to accelerate training. Other pruning ratio scheduling methods can be found in Appendix B of the supplementary materials.
\begin{figure}[H]
	\centering
	\includegraphics[width=6.5cm]{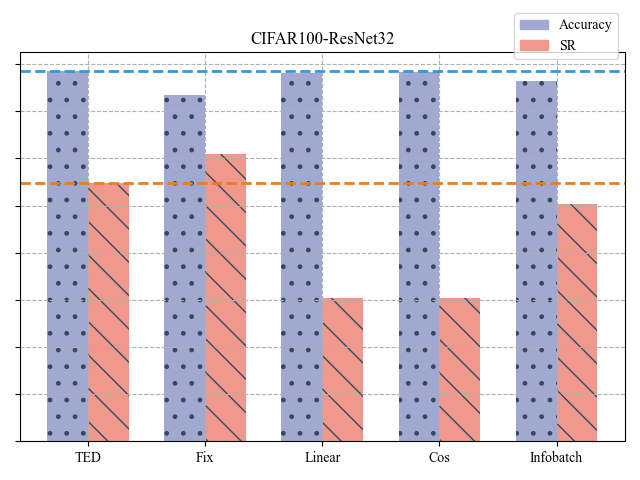}
	\caption{Comparison of accuracy and SR under different pruning ratio schedules: 'Fix' represents fixed sparsity, 'Linear' represents linear decline, 'Cos' represents cosine decline, and Infobatch is another method with uncertain pruning ratios.}
	\label{fig7}
\end{figure}

\begin{table}[H]
	\begin{center}
	\caption{Comparison of performance between TED and TEDS (TED with Surrogate). In TEDS, data selection is performed before training using a pre-trained surrogate model, and then static pruning is applied to the target model. The chosen surrogate model has the same architecture as the target model.}
	\label{table6}
	\resizebox{\columnwidth}{!}{
		\begin{tabular}{ccccc}
			\toprule
			Dataset & \multicolumn{2}{c}{CIFAR10-ResNet18} & \multicolumn{2}{c}{CIFAR100-ResNet32} \\
			\cmidrule(lr){1-1} \cmidrule(lr){2-3} \cmidrule(lr){4-5} 
			
			Pruning Ratio \% & 50 & 70 & 30 & 50 \\
			\cmidrule(lr){1-1} \cmidrule(lr){2-3} \cmidrule(lr){4-5} 
			DP&93.85&90.80&77.11&64.24\\
			EL2N-2&93.21&89.83&77.68&63.63\\
			TEDS & 93.84 & 90.97& 76.97 & 66.94\\
			TED(ours)&\textbf{95.31}&\textbf{94.94}&\textbf{79.11}&\textbf{78.95}\\
			\bottomrule
		\end{tabular}
	}
	\end{center}
\end{table}

\subsection{Generalization Analysis}
To further investigate TED's generalization performance, one-dimensional linear interpolation is used to analyze how TED's data ranking affects the loss landscape. Based on the method proposed in \cite{ref34}, the loss landscape is examined. Previous work \citep{ref35, ref36} suggests that good generalization is found in flat minima.

Specifically, we assess the performance of the model with parameters \(\left(1 - \alpha\right)\theta_1 + \alpha\theta_T\), where \(\theta_T\) is the converged model optimized by different dataset pruning algorithms. As depicted in the Figure~\ref{fig8}, TED's performance is unexpected. In the loss landscape, TED not only remains in a lower range but also exhibits the flattest behavior around \(\alpha\) near 1. In contrast, when we choose a lower IGD score (TED-reverse), the curve presents a landscape opposite to TED.

We have every reason to believe that data helping the model generalize are all within the higher values of IGD. This finding is consistent with the concepts in Section 3.2.
\begin{figure}[H]
	\centering
	\includegraphics[width=7cm]{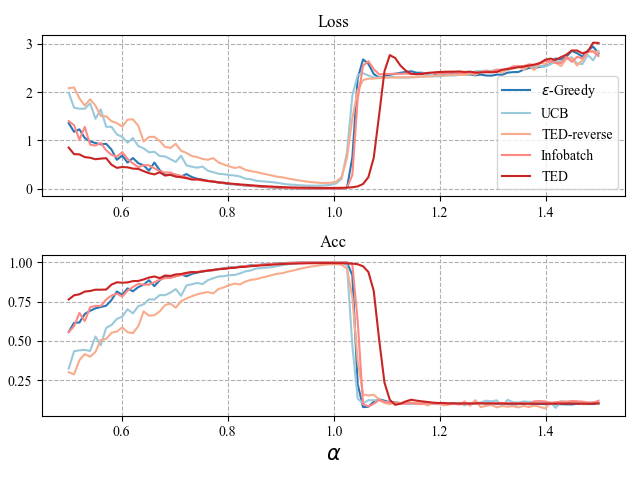}
	\caption{The figure demonstrates the evaluation of optimized models obtained through different pruning methods using one-dimensional linear interpolation.}
	\label{fig8}
\end{figure}


\section{Conclusion}
In this work, we proposed the TED pruning method, which is based on the internal generalization of dataset pruning and measures sample importance using the internal generalization distance before and after pruning. This metric not only captures essential data within the entire dataset but also achieves implicit regularization. Additionally, we found progressive pruning based on training dynamics, where using pruning ratios that increase from low to high better aligns with the model's training dynamics. For this, we designed a roller-coaster style pruning ratio schedule. According to experiments, TED achieved notable success in image classification, natural language processing, and fine-tuning large language models.

\textbf{Limitations and future works.} Although TED does not require a surrogate model, the TED method relies on identifying redundant samples during training. Similar to the lottery ticket hypothesis \cite{ref54} in model pruning, the ability to find suitable subsets for early training or for untrained models is our next goal.


\bibliography{mybibfile}

\newpage
\onecolumn

\appendix

\section{Implementation Details}

In this section, we primarily introduce the implementation details of the experiments. It is important to note that the TED pruning method has several hyperparameters: the Polyak averaging coefficient \(\alpha\), the hyperparameter \(\beta\) for the pruning rate schedule, and the annealing hyperparameter \(\gamma\) (the proportion of total training time for the annealing step). In all experiments, \(\alpha\) was set to 0.9, and \(\beta\) was set to 0.2-0.3. For image classification tasks, \(\gamma\) was set to 0.125, while for natural language understanding and large language model fine-tuning tasks, \(\gamma\) was set to 0.
\subsection{Implementation Details for Image Classification}
CIFAR-10 and CIFAR-100 are datasets containing 50,000 samples each. We used ResNet-18 and ResNet-32 models for our experiments, and our hyperparameters are reported in Table~\ref{table1}.

ImageNet-1K consists of nearly 1.2 million samples across 1,000 classes. We used ResNet-50 for the experiments, with specific parameters reported in Table~\ref{table2}. For learning rate scheduling, we multiply the learning rate by 0.1 every 30 epochs.

\begin{table}[h]
	\begin{center}
	{\caption{The hyperparameters we used in the CIFAR experiments.}}
	\label{table1}
	
	\begin{tabular}{|c|c|c|c|c|c|c|c|}
		\hline
		Epoch & lr &batch size&momentum&weight decay&optimizer&max lr&lr scheduler \\
		\hline
		200&0.2&128&0.9&5e-4&lars\cite{ref2}&5.2&OneCycle\cite{ref1}\\
		\hline
	\end{tabular}
	\end{center}
\end{table}

\begin{table}[h]
	\begin{center}
	\caption{The hyperparameters we used in the ImageNet-1K experiments.}
	\label{table2}
	
	\begin{tabular}{|c|c|c|c|c|c|}
		\hline
		Epoch & lr &batch size&momentum&weight decay&optimizer \\
		\hline
		90&0.1&256&0.9&1e-4&SGD\\
		\hline
	\end{tabular}
	\end{center}
\end{table}

\subsection{Implementation Details for NLU}
The GLUE benchmark is a suite designed to evaluate the performance of natural language understanding (NLU) systems. It comprises a set of different types of language understanding tasks to measure a model's performance across a broad range of tasks. These tasks encompass typical challenges in natural language understanding applications.

The pre-trained BERT-base \cite{ref6} model provided by Hugging Face was used by us. The specific fine-tuning hyperparameters are presented in Table~\ref{table3}.

\begin{table}[h]
	\begin{center}
	\caption{The hyperparameters we used in the GLUE experiments.}
	\label{table3}
	
	\begin{tabular}{|c|c|c|c|c|}
		\hline
		Epoch & lr &batch size&optimizer &lr scheduler\\
		\hline
		10&2e-5&32& Adam\cite{ref7}&linear\\
		\hline
	\end{tabular}
	\end{center}
\end{table}

\subsection{Implementation Details for Fine-tuning LLM}
In our experiments, the LLaMA2-7B \cite{ref4} pre-trained model, which has 7 billion parameters and is suitable for natural language processing tasks such as dialogue generation, question answering, and text summarization, was used by us. Additionally, the alpaca-gpt4 data from Hugging Face \cite{ref5}, consisting of English instructions generated by GPT-4 \cite{ref3} based on Alpaca prompts and containing 52k samples, was used by us. Our fine-tuning hyperparameters are shown in Table~\ref{table4}.

\begin{table}[H]
	\begin{center}
	\caption{The hyperparameters we used in the fine-tuning LLM.}
	\label{table4}
	\begin{tabular}{|c|c|c|c|c|c|}
		\hline
		Epoch & lr &batch size&optimizer &warmup steps&lr scheduler\\
		\hline
		3&5e-5&16& Adam\cite{ref7}&20&Cosine\\
		\hline
	\end{tabular}
	\end{center}
\end{table}

\section{Pruning Rate Scheduling}
\subsection{Roller-coaster schedule}
In section 3.2 of the main text, we designed a roller-coaster pruning rate schedule. As shown in the Figure~\ref{fig1}, the curve resembles the descent of a roller coaster track, with the rate of decline slowing down as training progresses. This curve is influenced by \(\beta\); simply put, the smaller the value of \(\beta\), the faster the rate of decline. The Figure~\ref{fig2} also presents other scheduling methods.

\begin{figure}[h]
	\centering
	\includegraphics[width=8cm]{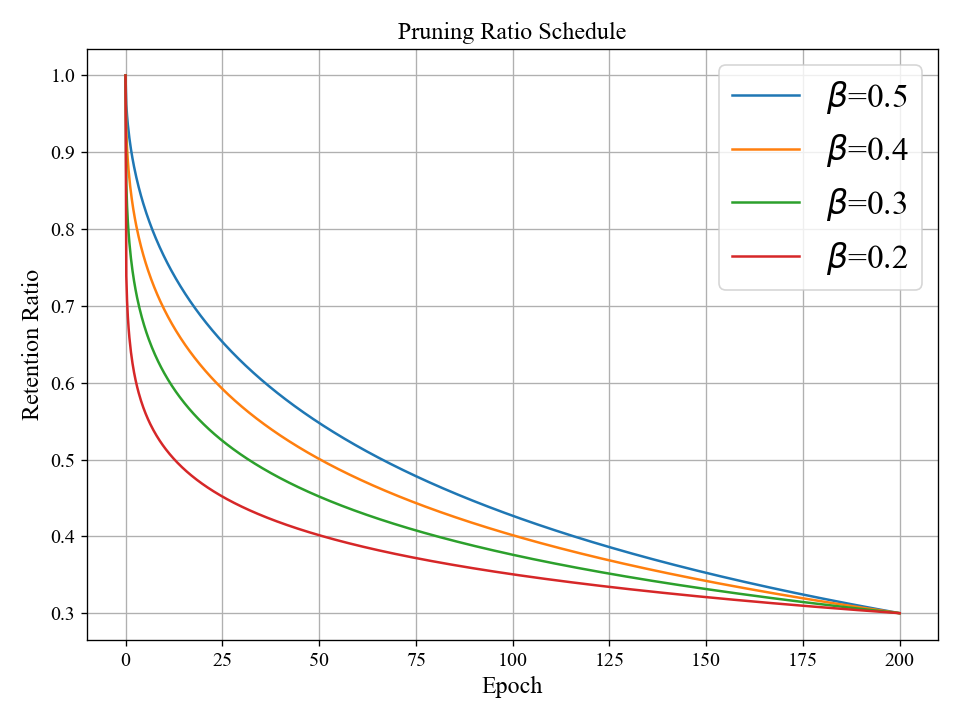}
	\caption{The figure shows the roller-coaster schedule curves at a 70\% pruning rate for different \(\beta\) values.}
	\label{fig1}
\end{figure}
\begin{figure}[tbh]
	\centering
	\includegraphics[width=8cm]{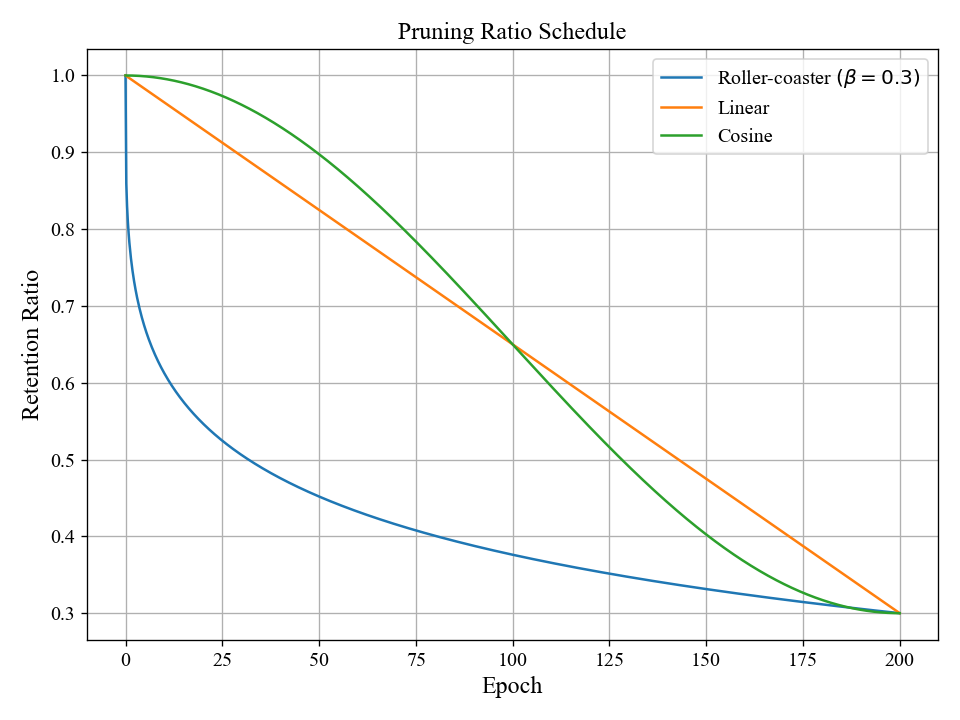}
	\caption{The curves of the roller-coaster schedule and other schedules at a 70\% pruning ratio.}
	\label{fig2}
\end{figure}
\section{The Complete Results on GLUE}
In the GLUE experiments, only the larger datasets were initially showcased by us. In this section, TED's performance across the entire set of GLUE tasks under different pruning rates is presented.
\begin{table}[H]
	\begin{center}
	\caption{Comparison of Dynamic Pruning Methods on GLUE}
	\label{table5}
	\begin{tabular}{clcccccccc}
		\toprule
		&Dataset & RTE & CoLA & SST-2 & STS-B & MRPC & QNLI & MNLI & QQP \\
		&Eval Metric&Acc&Matthew's Cor.&Acc&Pearson Cor.&Acc&Matched Acc.&Acc&Acc\\
		\midrule
		&Whole dataset & 67.08 & 57.61 & 92.78 & 88.76 & 86.24 & 89.15 & 84.37 & 91.10 \\
		\midrule
		\multirow{4}{*}{\begin{tabular}[c]{@{}c@{}}30\%\end{tabular}}
		&Infobatch & \textbf{67.06} & 59.08 & 92.89 & 88.26 & 84.38 & 91.26 & \textbf{84.40} & 91.32 \\
		&Greedy & 64.78 & 58.55 & 93.30 & 88.59 & 84.94 & 90.89 & 84.31 & 91.44 \\
		&UCB & 64.33 & 57.41 & 93.08 & \textbf{88.63} & 84.78 & 90.14 & 84.25 & 91.54 \\
		&TED(ours) & 66.21 & \textbf{59.98} & \textbf{93.08} & 88.49 & \textbf{86.16} & \textbf{91.34} & 84.27 & \textbf{91.68} \\
		\midrule
		\multirow{4}{*}{\begin{tabular}[c]{@{}c@{}}50\%\end{tabular}}
		&Infobatch & 68.21 & 58.12 & 92.86 & 88.85 & 84.94 & 90.55 & \textbf{84.24} & \textbf{91.35} \\
		&Greedy & 68.17 & \textbf{60.39} & \textbf{93.31} & 88.26 & 85.33 & 90.11 & 84.14 & 91.25 \\
		&UCB & \textbf{68.88} & 57.61 & 93.98 & 88.10 & 82.61 & 88.64 & 84.17 & 90.69 \\
		&TED(ours) & 68.22 & 58.58 & 92.67 & \textbf{88.94} & \textbf{85.81} & \textbf{91.18} & 83.96 & 91.29 \\
		\midrule
		\multirow{4}{*}{\begin{tabular}[c]{@{}c@{}}70\%\end{tabular}}
		&Infobatch & 64.67 & \textbf{58.21} & 92.52 & 87.38 & 81.65 & 89.12 & 82.18 & 91.02 \\
		&Greedy & 64.94 & 56.59 & 92.63 &\textbf{88.49} & 85.09 & 89.54 & 82.53 & 90.68 \\
		&UCB & 63.66 & 55.69 & 92.86 & 87.25 & 45.83 & 88.23 & 79.94 & 89.21 \\
		&TED(ours) & \textbf{65.45} & 57.33 & \textbf{93.01} & 87.87 & \textbf{85.69} & \textbf{90.16} & \textbf{83.10} & \textbf{91.24} \\
		\bottomrule
	\end{tabular}
	\end{center}
\end{table}
\end{document}